\documentclass[11pt]{article}

\usepackage{amsmath, amssymb, amsthm}
\usepackage{geometry}
\usepackage{graphicx}
\usepackage{booktabs}
\usepackage{hyperref}
\usepackage{natbib}
\usepackage{algorithm}
\usepackage{algpseudocode}
\usepackage{float}
\usepackage{caption}
\usepackage{subcaption}
\usepackage{xcolor}

\newcommand{\AUC}{\operatorname{AUC}}
\newcommand{\E}{\mathbb{E}}
\newcommand{\Prob}{\mathbb{P}}
\newcommand{\R}{\mathbb{R}}

\newtheorem{theorem}{Theorem}
\newtheorem{corollary}{Corollary}

\newtheorem{proposition}{Proposition}

\title{Interval-Based AUC(iAUC): Extending ROC Analysis to Uncertainty-Aware Classification}
\author{
\begin{tabular}{c c}
Yuqi Li$^{1}$ & Matthew M. Engelhard$^{1}$ \\
\end{tabular} \\[0.8em]
$^{1}$Department of Biostatistics \& Bioinformatics, Duke University \\[0.3em]
\texttt{\{yuqi.li, m.engelhard\}@duke.edu}
}

\date{}

\begin{document}

\maketitle

\begin{abstract}
In high-stakes risk prediction, quantifying uncertainty through interval-valued predictions is essential for reliable decision-making. However, standard evaluation tools like the receiver operating characteristic (ROC) curve and the area under the curve (AUC) are designed for point scores and fail to capture the impact of predictive uncertainty on ranking performance. We propose an uncertainty-aware ROC framework specifically for interval-valued predictions, introducing two new measures: $\AUC_L$ and $\AUC_U$. This framework enables an informative three-region decomposition of the ROC plane, partitioning pairwise rankings into correct, incorrect, and uncertain orderings. This approach naturally supports selective prediction by allowing models to abstain from ranking cases with overlapping intervals, thereby optimizing the trade-off between abstention rate and discriminative reliability. We prove that under valid class-conditional coverage, $\AUC_L$ and $\AUC_U$ provide formal lower and upper bounds on the theoretical optimal AUC ($\AUC^*$), characterizing the physical limit of achievable discrimination. The proposed framework applies broadly to interval-valued prediction models, regardless of the interval construction method. Experiments on real-world benchmark datasets, using bootstrap-based intervals as one instantiation, validate the framework's correctness and demonstrate its practical utility for uncertainty-aware evaluation and decision-making.
\end{abstract}

\section{Introduction}

In many high-stakes risk prediction tasks, predicting a single event probability is insufficient, because it does not acknowledge the model's confidence or lack thereof regarding the true event or label probability.
A prediction accompanied by a measure of uncertainty enables more careful decision-making: 
confident predictions may be acted upon immediately, whereas uncertain cases can be deferred to human review.
This is particularly important in medical risk prediction, where model outputs are often used to guide screening, diagnosis, or treatment decisions. For example, in diabetes risk screening, an uncertainty-aware model might output a predicted risk of 0.45 with a 95\% confidence interval of [0.25, 0.65]. If this interval lies entirely above a clinical decision threshold, immediate intervention may be justified, whereas if it spans the threshold, further testing or specialist consultation may be warranted.

Motivated by this need, modern uncertainty quantification methods often produce \emph{interval-valued predictions} rather than point estimates. Examples include prediction intervals derived from Bayesian credible intervals, bootstrap procedures, and ensemble methods \citep{efron1994bootstrap,gelman2013bayesian,gal2016dropout}. For each input $X$, the model outputs an interval $(L(X), U(X))$ that intends to contain the true event or label probability. While these methods provide richer predictive information, they raise more questions: 
how can we enhance our evaluation of uncertainty-aware classification beyond point-based metrics, 
and how can the quality of our prediction intervals be evaluated when the true event probabilities are unobserved?

Standard evaluation tools for binary classification were designed for point predictions. The Receiver Operating Characteristic (ROC) curve and its summary statistic, the Area Under the Curve (AUC), remain the most widely used metrics. The classical AUC admits a clear probabilistic interpretation as the probability that a randomly chosen positive instance is ranked above a randomly chosen negative one \citep{hanley1982meaning}. This interpretation relies on the ability to rank all instances using a scalar score. While existing methods handle tied scores by assigning partial credit to unranked pairs \citep{hand2001simple}, they are insufficient for interval-valued predictions. In this setting, interval overlap represents explicit ranking ambiguity rather than a mere statistical coincidence of identical scores. Therefore, the induced partial ordering calls for an evaluation framework that directly reflects uncertainty in interval-valued scores.

In current practice, interval predictions are often collapsed to point estimates, such as the mean or midpoint, and then evaluated using the standard AUC \citep{lakshminarayanan2017simple, kuleshov2018accurate}.
This approach discards uncertainty information and prevents analysis of how interval width or overlap relates to discrimination performance.
Alternatively, interval predictions are evaluated through properties such as calibration or coverage, typically reported separately from discrimination metrics \citep{gneiting2007strictly,bracher2021evaluating}.
Moreover, in binary prediction settings, coverage of event probabilities is not directly observable and can usually be assessed only through simulation or modeling assumptions.
Taken together, these practices treat discrimination and uncertainty as separate concerns, leaving no unified metric for evaluating uncertainty-aware classifiers.
\begin{figure}[htbp]
\centering
\includegraphics[width=0.75\columnwidth]{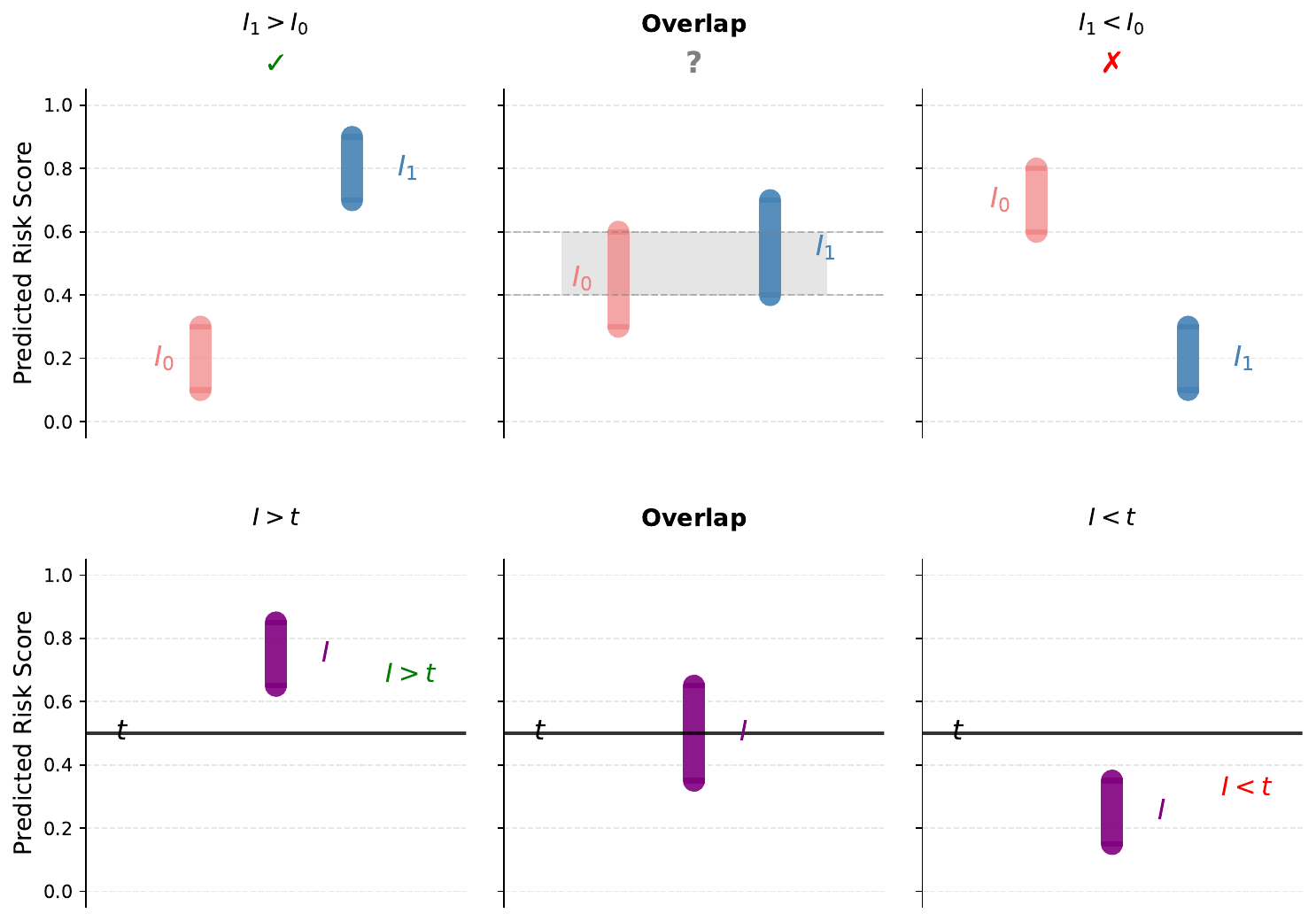}
\caption{Illustration of interval-based comparisons.
\textbf{Top:} pairwise comparisons between a positive instance $I_1$ and a negative instance $I_0$. \textbf{Bottom:} comparisons between an interval prediction $I$ and a decision threshold $t$, illustrating confident decisions ($I > t$ or $I < t$) and ambiguity when the interval overlaps the threshold.
}
\label{fig:intro_intervals}
\end{figure}

In this paper, we develop a rigorous framework for ROC analysis with interval-valued predictions. For two intervals $I_1 = (L_1, U_1)$ and $I_0 = (L_0, U_0)$ from positive and negative classes, we define three comparison outcomes (Figure~\ref{fig:intro_intervals}): (i) \textbf{confidently correct} if $L_1 > U_0$, (ii) \textbf{confidently incorrect} if $U_1 < L_0$, and (iii) \textbf{uncertain} if intervals overlap, meaning neither
$L_1 > U_0$ nor $U_1 < L_0$ holds. Based on these comparisons, we define two ROC-style curves using rate functions with different levels of strictness: $\text{TPR}_L$ vs $\text{FPR}_U$ (strict criteria requiring entire intervals to exceed the threshold) and $\text{TPR}_U$ vs $\text{FPR}_L$ (relaxed criteria allowing intervals to contain the threshold). The areas under these curves are denoted $\AUC_L$ and $\AUC_U$, with simple probabilistic interpretations:
\[
\AUC_L = \mathbb{P}(I_1 > I_0),
\qquad
\AUC_U = 1 - \mathbb{P}(I_0 > I_1).
\]

We first prove that $\AUC_L$ and $\AUC_U$ equal the probabilities of strict interval orderings (Theorems~\ref{thm:aucl} and~\ref{thm:aucu}). Together with the overlap probability, these yield a \emph{three-region decomposition}:
\[
\mathbb{P}(I_1 > I_0) + \mathbb{P}(I_1 < I_0) + \mathbb{P}(\text{overlap}) = 1,
\]
providing a complete characterization of ranking uncertainty. This decomposition can be visualized geometrically in the ROC plane (Figure~\ref{fig:combined_plot}), where the three probabilities correspond to disjoint regions summing to unity. Unlike the classical AUC, which provides a single scalar value, our framework quantifies the degree to which model-based ranking performance is definitive versus ambiguous.

We further relate interval-based discrimination to the theoretically best achievable ranking performance.
The \emph{optimal AUC}, denoted $\AUC^*$, is defined as the supremum of $\mathbb{P}(f(X_1) > f(X_0))$ over all scoring functions $f$, we show that interval-based discrimination metrics provide informative bounds on $\AUC^*$ under mild coverage assumptions. These results connect observable interval predictions to fundamental limits on ranking performance, and clarify how predictive uncertainty constrains achievable discrimination. 

We validate the framework empirically using the Pima Indians Diabetes dataset with bootstrap-based prediction intervals. Experiments verify the theoretical equivalences and characterize how interval width affects the three-region decomposition. Our main contributions are: (i) extending ROC analysis to interval-valued predictions with $\AUC_L$ and $\AUC_U$ metrics; (ii) proving a three-region decomposition of the ROC plane characterizing ranking uncertainty; (iii) establishing bounds on optimal AUC under class-conditional coverage; and (iv) providing practical tools for evaluating selective prediction and uncertainty-aware model selection. This framework offers a principled approach to discrimination analysis for interval-valued predictions.

\section{Related Work}

\paragraph{Classical ROC Analysis and AUC.}
The Receiver Operating Characteristic (ROC) curve originated in signal detection theory and is now a standard tool for evaluating binary classifiers \cite{hanley1982meaning,fawcett2006introduction}.
It plots the true positive rate against the false positive rate as a classification threshold varies, while the area under the curve (AUC) summarizes discriminative ability in a single scalar.
A key theoretical result establishes the probabilistic interpretation of AUC: for a scoring function $f$,
\[
\AUC(f) = \mathbb{P}\bigl(f(X^+) > f(X^-)\bigr) + \tfrac{1}{2}\mathbb{P}\bigl(f(X^+) = f(X^-)\bigr),
\]
where $X^+$ and $X^-$ are independently drawn positive and negative instances.
This interpretation links the AUC to pairwise ranking accuracy and underlies its widespread use in applications such as medical risk prediction \cite{pepe2003statistical}.
Numerous extensions have been proposed, including the partial AUC, cost-sensitive ROC analysis, and multi-class generalizations \cite{fawcett2006introduction}.
Related work has also studied the Bayes-optimal scoring rule that maximizes the AUC under the true data-generating process, often referred to as the optimal AUC \cite{cortes2004auc, clemencon2008ranking}. 

\paragraph{Methods Producing Interval-Valued Predictions.}
A variety of approaches produce interval-valued predictions for binary risk prediction tasks.
Bootstrap methods generate empirical prediction distributions by resampling the training data and constructing percentile-based confidence intervals \cite{efron1994bootstrap,diciccio1996bootstrap}.
Bayesian methods provide posterior predictive distributions from which credible intervals can be derived, including Bayesian generalized linear models, Gaussian processes, and Bayesian neural networks \cite{mackay1992practical, gelman2013bayesian,gal2016dropout}.
Ensemble methods estimate uncertainty through variability across models or repeated forward passes, yielding intervals via empirical quantiles \cite{lakshminarayanan2017simple}. Our framework is agnostic to the interval generation mechanism and applies to any method that outputs interval-valued predictions. 

\paragraph{Evaluation of Prediction Intervals.}
Standard evaluation metrics for prediction intervals focus on \emph{calibration} and \emph{sharpness}, often following the principle of maximizing sharpness (measured by average width) subject to valid calibration \citep{gneiting2007strictly}. These properties are typically assessed via proper scoring rules like the \emph{Weighted Interval Score (WIS)}, or by examining under- and over-coverage rates separately \citep{bracher2021evaluating}. However, these metrics are fundamentally local and pointwise; they evaluate uncertainty quantification for individual instances but do not characterize the global discriminative performance or ranking consistency \citep{gneiting2007strictly}. Furthermore, while group-level calibration is estimable using binary labels, individual-level coverage of the true posterior $\eta(x)$ remains unobservable in real-world settings. Our framework bridges this gap by mapping unobservable coverage properties to observable bounds on the optimal AUC \cite{clemencon2008ranking}. This allows practitioners to translate uncertainty guarantees from methods like conformal prediction \citep{vovk2005algorithmic, angelopoulos2021gentle} into a tangible range of discriminative performance.

\paragraph{Selective Prediction and Abstention.}
Selective prediction studies decision-making under uncertainty by allowing models to abstain when confidence is low, which enables a trade-off between retention rate and accuracy on the non-abstaining instances \citep{el2010selective, geifman2019selectivenet}. Conventional approaches typically rely on scalar confidence scores to implement pointwise rejection, with most research focusing on label prediction and the empirical estimation of thresholds to control error rates. However, this literature primarily focuses on decision policies for individual labels rather than on how uncertainty propagates to ranking or global discrimination metrics \citep{el2010selective}. Furthermore, while label-based confidence can be verified directly against observed outcomes, there is no inherent way to assess the coverage of second-order uncertainty quantification, which captures uncertainty over the predicted probabilities themselves, as the true posterior $\eta(x)$ remains unobservable in real-world binary data. Our framework addresses this gap by characterizing how interval-valued predictions induce pairwise ranking ambiguity. Instead of rejecting individual points, we utilize interval overlaps to identify ambiguous pairs and assess discrimination via the \textbf{uncertainty-aware AUC} (see Section~\ref{subsec:selective}), which measures the fraction of non-overlapping pairs that are correctly ordered.

\paragraph{Research Gap and Our Contribution.}
Taken together, existing work provides well-developed tools for evaluating scalar classifiers and for assessing the calibration and sharpness of prediction intervals. What remains missing is a principled framework to incorporate interval-valued uncertainty into discrimination analysis. In particular, existing metrics do not quantify ranking ambiguity induced by overlapping intervals, nor do they provide AUC-style summaries that account for uncertainty. This paper fills this gap by extending ROC analysis to interval-valued predictions, providing probabilistic interpretations, and introducing the $uAUC$ metric for uncertainty-aware evaluation. Furthermore, by linking uncertainty calibration directly to bounds on the optimal AUC under class-conditional coverage, our framework enables practitioners to translate observable interval predictions into a tangible range of the best achievable discrimination.

\section{Setup and Theory}

\subsection{Problem Setup and Interval Comparisons}

We consider a binary risk prediction problem with labels $Y \in \{0,1\}$, where $Y=1$ denotes the positive class and $Y=0$ the negative class. The goal is to predict a continuous risk score associated with the positive class, which can subsequently be used to support threshold-based decisions. Instead of scalar prediction scores, we suppose the classifier outputs an interval-valued score $I(x) = (L(x), U(x))$, where $L(x) \le U(x)$ for each input $x$.

Let $I_1 = (L_1, U_1)$ and $I_0 = (L_0, U_0)$ denote random prediction intervals obtained by applying this scoring rule to independent draws from the positive and negative class distributions, respectively. Note that we use subscripts $1$ and $0$ to denote conditioning on the true class label, and this pairwise notation is used to characterize ranking behavior. We write $I_1 \sim F_1$ and $I_0 \sim F_0$ for the corresponding distributions over intervals.
Throughout, we assume $L_i \le U_i$ almost surely for $i \in \{0,1\}$.

We compare interval-valued scores using strict non-overlap criteria, under which one object is ranked above another only when the ordering
is unambiguous given the reported uncertainty.
For an interval $I=(L,U)$ and a threshold $t \in \mathbb{R}$, we write $I>t$ if $L>t$ and $I<t$ if $U<t$, requiring the entire interval to lie on one side of the threshold.
Similarly, for two intervals $I_1=(L_1,U_1)$ and $I_0=(L_0,U_0)$, we define
\[
I_1 > I_0 \iff L_1 > U_0,
\qquad
I_1 < I_0 \iff U_1 < L_0.
\]
If neither condition holds, the intervals overlap. Figure~\ref{fig:intro_intervals} illustrates the three cases. These rules induce a partial ordering that distinguishes definitively ordered pairs from ambiguous ones.

\subsection{Two ROC-Style Curves and Interval-Based AUC Quantities}

Using interval--threshold comparisons, we define four rate functions, where TPR and FPR denote the true positive rate and false positive rate, respectively, for a threshold $t \in \mathbb{R}$:
\begin{align*}
\text{TPR}_L(t) &= \mathbb{P}(L_1 > t), 
& \text{TPR}_U(t) &= \mathbb{P}(U_1 > t), \\
\text{FPR}_L(t) &= \mathbb{P}(L_0 > t), 
& \text{FPR}_U(t) &= \mathbb{P}(U_0 > t).
\end{align*}
By construction, $\text{TPR}_L(t) \le \text{TPR}_U(t)$ and $\text{FPR}_L(t) \le \text{FPR}_U(t)$ for all $t$.

We define two ROC-style curves by pairing rate functions with different levels of strictness.
The first curve plots $\text{TPR}_L(t)$ against $\text{FPR}_U(t)$ as $t$ varies, and the second plots $\text{FPR}_L(t)$ against $\text{TPR}_U(t)$.
The associated areas under these curves are denoted by
\begin{align}
\AUC_L &= \int \text{TPR}_L(t)\, d[\text{FPR}_U(t)], \\
\AUC_U &= \int \text{TPR}_U(t)\, d[\text{FPR}_L(t)].
\end{align}
The pairing of rate functions reflects different notions of conservativeness
in interval-based discrimination. The curve $(\text{TPR}_L, \text{FPR}_U)$ corresponds to a strict comparison rule.
Here, $\text{TPR}_L$ is conservative in that a positive instance contributes to the true positive rate only if its entire interval lies above the threshold, while $\text{FPR}_U$ is conservative in the opposite direction, counting a negative instance as a false positive whenever any part of its interval exceeds the threshold. Together, this pairing yields a conservative assessment of discrimination
that requires certainty for correct rankings and penalizes any potential error.

In contrast, the curve $(\text{TPR}_U, \text{FPR}_L)$ corresponds to a permissive rule. Under this pairing, $\text{TPR}_U$ credits a positive instance whenever the threshold is at least partially exceeded, while $\text{FPR}_L$ penalizes a negative instance only when its entire interval exceeds the threshold. This rule favors potential separation and yields an upper bound on discriminative performance under interval uncertainty.
\begin{figure}[htbp]
\centering
\includegraphics[
  width=0.5\columnwidth,
  clip
]{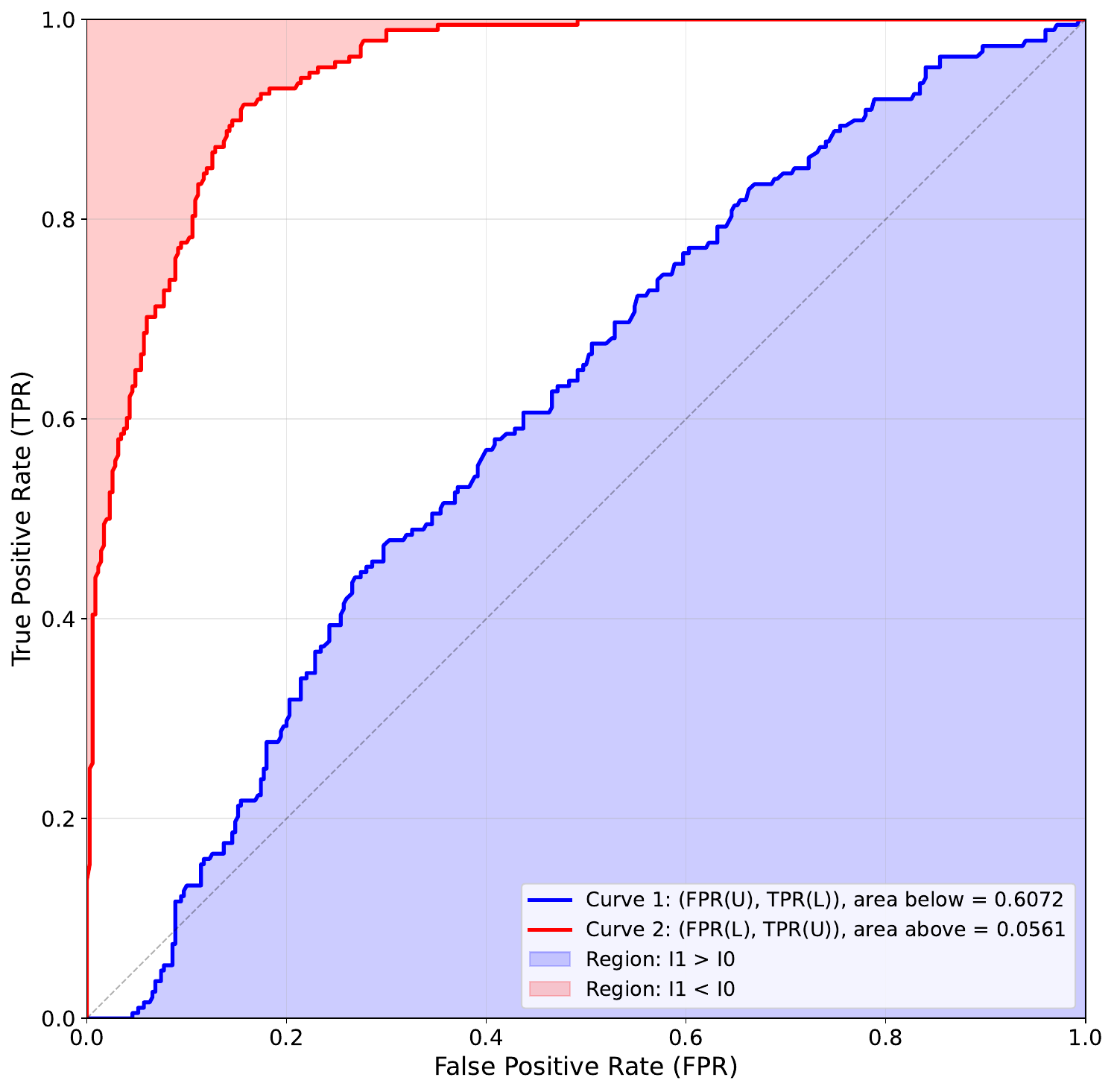}
\caption{Geometric interpretation of interval-based ROC analysis.
The blue curve plots $\mathrm{TPR}_L$ versus $\mathrm{FPR}_U$ (area $= \AUC_L$),
and the red curve plots $\mathrm{TPR}_U$ versus $\mathrm{FPR}_L$ (area above $= 1-\AUC_U$).
The three regions correspond to
$\mathbb{P}(I_1 > I_0)$ (blue),
$\mathbb{P}(\text{overlap})$ (white),
and $\mathbb{P}(I_1 < I_0)$ (red).
Curves are shown for a 90\% confidence level from Experiment~1 in Section~\ref{sec:exp1}.}
\label{fig:combined_plot}
\end{figure}

\subsection{Main Theoretical Results}
A key property of the classical AUC is its probabilistic interpretation as $\AUC = \mathbb{P}\bigl(f(X_1) > f(X_0)\bigr)$, the probability that a randomly chosen positive instance is ranked above a randomly chosen negative one. We show that this property extends naturally to interval-valued predictions, yielding analogous interpretations for $\AUC_L$ and $\AUC_U$.

For a randomly chosen positive interval $I_1=(L_1,U_1)$ and an independently
chosen negative interval $I_0=(L_0,U_0)$, the following two results hold:

\begin{theorem}\label{thm:aucl}

\begin{equation}
\begin{aligned}
\AUC_L = \mathbb{P}(L_1 > U_0) = \mathbb{P}(I_1 > I_0).
\end{aligned}
\end{equation}
\end{theorem}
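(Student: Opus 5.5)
The plan is the classical argument behind $\AUC=\Prob(f(X_1)>f(X_0))$: write the Stieltjes integral as an expectation over the negative class, then use independence and Fubini. The one thing to pin down first is orientation. As $t$ increases from $-\infty$ to $+\infty$, $\text{FPR}_U(t)=\Prob(U_0>t)$ decreases from $1$ to $0$. The area under the curve $(\text{FPR}_U,\text{TPR}_L)$ is therefore
\[
\AUC_L=\int_{+\infty}^{-\infty}\text{TPR}_L(t)\,d[\text{FPR}_U(t)]=-\int_{-\infty}^{\infty}\text{TPR}_L(t)\,d[\text{FPR}_U(t)].
\]

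Let $G_0(t)=\Prob(U_0\le t)$ be the distribution function of $U_0$. Then $\text{FPR}_U(t)=1-G_0(t)$, so $-d[\text{FPR}_U(t)]=dG_0(t)$. This gives
\[
\AUC_L=\int_{\R}\Prob(L_1>t)\,dG_0(t)=\E\bigl[\Prob(L_1>t)\big|_{t=U_0}\bigr].
\]
Here I am identifying the Lebesgue--Stieltjes integral against $dG_0$ with integration against the law of $U_0$. Since $I_1$ and $I_0$ are independent, $L_1$ and $U_0$ are independent. By Fubini (the integrand $\mathbf 1\{L_1>U_0\}$ is bounded and measurable),
\[
\E\bigl[\Prob(L_1>t)\big|_{t=U_0}\bigr]=\E\bigl[\E[\mathbf 1\{L_1>U_0\}\mid U_0]\bigr]=\Prob(L_1>U_0).
\]
The final equality $\Prob(L_1>U_0)=\Prob(I_1>I_0)$ is just the definition of the interval order $I_1>I_0\iff L_1>U_0$.

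The main obstacle is making the area of a possibly non-smooth curve rigorous when the distributions have atoms. If $U_0$ has point masses, the ROC curve has jumps in its horizontal coordinate. How those jumps are bridged determines whether ties get partial credit, as in the $\tfrac12\Prob(f(X^+)=f(X^-))$ term of the classical formula.

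I would take the definition literally as a Lebesgue--Stieltjes integral with the right-continuous integrand $\text{TPR}_L(t)=\Prob(L_1>t)$. At an atom $t_0$ of $U_0$, the integrand then contributes $\Prob(L_1>t_0)$, which counts only strict exceedance. This matches the strict inequality in $L_1>U_0$ exactly. In the continuous case no such issue arises, and the change of variables can be checked directly by substituting $x=\text{FPR}_U(t)$. I would note that under linear interpolation of jumps, an extra $\tfrac12\Prob(L_1=U_0)$ term would appear, so the statement implicitly assumes either the Stieltjes convention or $\Prob(L_1=U_0)=0$.
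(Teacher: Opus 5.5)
Your proof is correct and follows the same route as the paper: you rewrite the Stieltjes integral against $\text{FPR}_U$ as an expectation over the law of $U_0$, namely $\E_{U_0}[\Prob(L_1>t)|_{t=U_0}]$, and then use independence of $L_1$ and $U_0$ to get $\Prob(L_1>U_0)$. You also make explicit two things the paper leaves implicit: the orientation (the paper's step $\int(1-G_1)\,d(1-H_0)=\E_{U_0}[1-G_1(U_0)]$ silently lets $t$ run from $+\infty$ to $-\infty$), and the point that atoms with $\Prob(L_1=U_0)>0$ would add a $\tfrac12\Prob(L_1=U_0)$ term under trapezoidal bridging; both are genuine refinements.
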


\begin{proof}
Let $G_1(l)=\mathbb{P}(L_1 \le l)$ and $H_0(u)=\mathbb{P}(U_0 \le u)$.
Since $\text{TPR}_L(t)=\mathbb{P}(L_1>t)=1-G_1(t)$ and
$\text{FPR}_U(t)=\mathbb{P}(U_0>t)=1-H_0(t)$, we have $d[\text{FPR}_U(t)]=-dH_0(t)$.
Therefore,
\[
\begin{aligned}
\AUC_L
&= \int \bigl(1-G_1(t)\bigr)\, d(1-H_0(t)) \\
&= \mathbb{E}_{U_0}\!\left[1-G_1(U_0)\right] \\
&= \mathbb{P}(L_1>U_0),
\end{aligned}
\]
\end{proof}

\begin{theorem}\label{thm:aucu}
\begin{equation}
\begin{aligned}
\AUC_U = 1 - \mathbb{P}(U_1 < L_0) = 1 - \mathbb{P}(I_1 < I_0).
\end{aligned}
\end{equation}
\end{theorem}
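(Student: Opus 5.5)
We mirror the proof of Theorem~\ref{thm:aucl}, now pairing $\text{TPR}_U$ with $\text{FPR}_L$. Set $G_0(l)=\mathbb{P}(L_0\le l)$ and $H_1(u)=\mathbb{P}(U_1\le u)$. Then $\text{TPR}_U(t)=1-H_1(t)$ and $\text{FPR}_L(t)=1-G_0(t)$, so $d[\text{FPR}_L(t)]=-dG_0(t)$. We adopt the same orientation convention as in Theorem~\ref{thm:aucl}: the curve is traversed with the FPR coordinate increasing from $0$ to $1$, i.e.\ $t$ decreasing. This turns the Stieltjes integral into an integral against the law of $L_0$:
\[
\AUC_U=\int \bigl(1-H_1(t)\bigr)\,dG_0(t)=\mathbb{E}_{L_0}\!\left[1-H_1(L_0)\right]=\mathbb{P}(U_1>L_0).
\]
The last equality uses the independence of $I_1$ and $I_0$ and Fubini's theorem: conditioning on $L_0$ gives $\mathbb{P}(U_1>L_0\mid L_0)=1-H_1(L_0)$.

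Next we convert $\mathbb{P}(U_1>L_0)$ into $1-\mathbb{P}(U_1<L_0)$. We have
\[
\mathbb{P}(U_1>L_0)=1-\mathbb{P}(U_1<L_0)-\mathbb{P}(U_1=L_0),
\]
so the identity holds exactly when the tie event $\{U_1=L_0\}$ has probability zero. This is guaranteed if either $U_1$ or $L_0$ has a continuous distribution, since independence then gives $\mathbb{P}(U_1=L_0)=\int \mathbb{P}(U_1=l)\,dG_0(l)=0$. The same implicit assumption already underlies Theorem~\ref{thm:aucl}, where ties $L_1=U_0$ were ignored. Under it, $\AUC_U=1-\mathbb{P}(U_1<L_0)=1-\mathbb{P}(I_1<I_0)$ by the definition of $I_1<I_0$.

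The main obstacle is the handling of ties and atoms. If $H_1$ or $G_0$ has jumps at common points, the Stieltjes integral depends on whether it is evaluated with left- or right-continuous integrands. Our plan is to state the result under continuity of $U_1$ or $L_0$, and then remark on the general case. With the integrand evaluated as in the classical trapezoidal ROC convention, an atom at a common point contributes half its mass, giving $\mathbb{P}(U_1>L_0)+\tfrac12\mathbb{P}(U_1=L_0)$. So the exact statement of Theorem~\ref{thm:aucu} requires no ties; this mirrors the tie correction in the classical AUC recalled in the Related Work section.
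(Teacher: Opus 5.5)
Your proposal is correct and is essentially the paper's argument: the paper's one-line proof just says to mirror Theorem~\ref{thm:aucl} with the classes exchanged. Your computation $\AUC_U=\mathbb{E}_{L_0}[1-H_1(L_0)]=\mathbb{P}(U_1>L_0)$ followed by complementation is exactly that, and it usefully makes explicit the orientation convention and the no-ties requirement $\mathbb{P}(U_1=L_0)=0$ that the paper leaves implicit. One small correction to your side remark: under the Lebesgue--Stieltjes reading, the identity $\AUC_L=\mathbb{P}(L_1>U_0)$ of Theorem~\ref{thm:aucl} holds with no tie assumption, and ties affect it only under the trapezoidal reading. In Theorem~\ref{thm:aucu}, by contrast, the assumption is needed under either reading because of the complement step.
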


\begin{proof}
This follows by an analogous argument, exchanging the roles of $(L_1,U_1)$ and $(L_0,U_0)$ in the preceding proof.
\end{proof}

\begin{corollary}[Three-Region Decomposition]\label{cor:decomposition}
\begin{align}
    \mathbb{P}(I_1 > I_0) + \mathbb{P}(I_1 < I_0) + \mathbb{P}(\text{overlap}) = 1,
\end{align}
where overlap is defined by $L_1 \le U_0$ and $U_1 \ge L_0$.
\end{corollary}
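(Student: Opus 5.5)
The plan is to show that the three events $\{L_1 > U_0\}$, $\{U_1 < L_0\}$, and the overlap event $\{L_1 \le U_0,\ U_1 \ge L_0\}$ form a partition of the sample space, up to a null set. Additivity of $\mathbb{P}$ then gives the identity. Theorems~\ref{thm:aucl} and~\ref{thm:aucu} are not needed for the identity itself. They only supply the ROC interpretation: the three terms equal $\AUC_L$, $1-\AUC_U$, and $\AUC_U-\AUC_L$, respectively.

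\textbf{Exhaustiveness.} By definition, the overlap event is the complement of $\{L_1 > U_0\} \cup \{U_1 < L_0\}$. De Morgan gives
\[
\{L_1 > U_0\}^c \cap \{U_1 < L_0\}^c = \{L_1 \le U_0\} \cap \{U_1 \ge L_0\},
\]
which is exactly the overlap event in the statement. So every outcome lies in at least one of the three events.

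\textbf{Disjointness.} The overlap event is disjoint from the other two by construction, since it is their joint complement. It remains to show that $\{L_1 > U_0\}$ and $\{U_1 < L_0\}$ are disjoint. This is the only step that uses the standing assumption $L_i \le U_i$ almost surely. Suppose both events held on some outcome. Then we would have the chain
\[
U_1 < L_0 \le U_0 < L_1 \le U_1,
\]
which gives $U_1 < U_1$, a contradiction. Hence the intersection is contained in $\{L_0 > U_0\} \cup \{L_1 > U_1\}$, which has probability zero.

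Applying finite additivity to the three (a.s.) disjoint events, whose union is the whole space, yields the stated sum equal to $1$. The main point requiring care is this disjointness step, because it is where the almost-sure validity of the intervals enters. Without $L_i \le U_i$, the two strict-ordering events could co-occur, and the decomposition would fail.
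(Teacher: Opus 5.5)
Your proof is correct and matches the argument the paper intends: the paper states the corollary without a written proof right after Theorems~\ref{thm:aucl} and~\ref{thm:aucu}. The identity is just the fact that $\{L_1>U_0\}$, $\{U_1<L_0\}$ and the overlap event partition the sample space, and, as you note, the theorems only identify the terms as $\AUC_L$, $1-\AUC_U$ and $\AUC_U-\AUC_L$. Your use of the standing assumption $L_i\le U_i$ a.s.\ to show that the two strict orderings cannot occur together spells out the one step the paper leaves implicit.
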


\paragraph{Geometric Interpretation}
Plotting the two ROC-style curves in the unit square yields a natural geometric interpretation of interval-based discrimination (Figure~\ref{fig:combined_plot}). The area below the $\text{TPR}_L$--$\text{FPR}_U$ curve equals $\mathbb{P}(I_1 > I_0)$, while the area above the reflected $\text{TPR}_U$--$\text{FPR}_L$ curve corresponds to $\mathbb{P}(I_1<I_0)$. The remaining region represents the probability of overlap, capturing ranking ambiguity induced by interval-valued predictions.

\paragraph{Connection to Classical AUC}
When prediction intervals collapse to points, that is, $L=U$ almost surely, the overlap probability vanishes and the partial ordering becomes total. In this case, $\text{TPR}_L(t)=\text{TPR}_U(t)$, and $\text{FPR}_L(t)=\text{FPR}_U(t)$, so that $\AUC_L=\AUC_U$ reduces to the classical AUC. Thus, the proposed framework recovers standard ROC analysis as a special case.

\subsection{Selective Prediction and Uncertainty-Aware AUC}\label{subsec:selective}

Beyond pure evaluation, the proposed framework naturally supports \emph{selective prediction} by identifying pairs where the model should abstain due to ranking ambiguity. In this context, the classifier provides a definitive ordering for a pair $(X_1, X_0)$ only if their prediction intervals do not overlap. 

To quantify the discriminative performance on these decisive cases, we define the \textbf{uncertainty-aware AUC ($uAUC$)} as the conditional probability of a correct ranking given that a decisive ordering exists:
\begin{equation}\label{eq:uauc_def}
    uAUC := \frac{\Prob(I_1 > I_0)}{\Prob(I_1 > I_0) + \Prob(I_1 < I_0)}.
\end{equation}
The $uAUC$ captures the intrinsic quality of the model's high-confidence rankings. A fundamental property of this metric is that its reliability should not degrade—and typically improves—as the uncertainty requirements become more stringent. This is formalized in the following proposition.

\begin{proposition}[Monotonicity of $uAUC$]\label{thm:uauc}
Let $I(x, \gamma)$ be a symmetric prediction interval centered at a score $s(x)$, with half-width $\delta(x, \gamma)$ that is non-decreasing in the confidence level $\gamma \in (0, 1)$. Assume that the probability of a correct ranking, 
$\Prob(s(X_1) > s(X_0) \text{ is correct} \mid |s(X_1) - s(X_0)| = \Delta s)$, 
is a non-decreasing function of the score difference $\Delta s$. Then the uncertainty-aware AUC, $uAUC(\gamma)$, is non-decreasing in $\gamma$.
\end{proposition}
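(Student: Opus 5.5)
The plan is to rewrite $uAUC(\gamma)$ as a conditional expectation of the correctness probability given that the score gap exceeds a threshold. Monotonicity in $\gamma$ then follows from a standard truncation argument for non-decreasing functions.

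\textbf{Step 1 (rewrite decisiveness as a threshold on the score gap).} Write $D = s(X_1) - s(X_0)$, $A = |D|$, and $\delta_i = \delta(X_i,\gamma)$ for $i \in \{0,1\}$. Because the intervals are symmetric, $I_1 > I_0$ holds iff $D > \delta_1 + \delta_0$, and $I_1 < I_0$ holds iff $-D > \delta_1 + \delta_0$. So a pair is decisive iff $A > \delta_1 + \delta_0$, and it is correctly ordered iff in addition $D > 0$.

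I will first treat the case the statement implicitly has in mind: the half-width depends on $x$ only through $\gamma$, i.e.\ $\delta(x,\gamma) = \delta(\gamma)$. Set $c(\gamma) = 2\delta(\gamma)$, which is non-decreasing in $\gamma$. Then
\[
uAUC(\gamma) = \frac{\Prob(D > c(\gamma))}{\Prob(A > c(\gamma))} = \E\bigl[p(A) \mid A > c(\gamma)\bigr],
\]
where $p(\Delta) := \Prob(D > 0 \mid A = \Delta)$. By hypothesis, $p$ is non-decreasing. Here I assume $\Prob(A > c(\gamma)) > 0$ so that $uAUC$ is defined, and I use the convention that ties $D = 0$ are never decisive.

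\textbf{Step 2 (truncation lemma).} For a non-decreasing $p$ and thresholds $c < c'$ with $\Prob(A > c') > 0$, I will show that $\E[p(A) \mid A > c] \le \E[p(A) \mid A > c']$. The argument is a mixture decomposition:
\[
\E[p(A)\mid A>c] = \lambda\, \E[p(A)\mid c<A\le c'] + (1-\lambda)\, \E[p(A)\mid A>c'],
\]
with $\lambda = \Prob(c < A \le c')/\Prob(A > c)$. Monotonicity of $p$ gives
\[
\E[p(A)\mid c<A\le c'] \le p(c') \le \E[p(A)\mid A>c'].
\]
A convex combination of two numbers is at most the larger one, so the claim follows. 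The degenerate case $\lambda = 0$ is trivial. Applying this with $c = c(\gamma)$ and $c' = c(\gamma')$ for $\gamma < \gamma'$ yields $uAUC(\gamma) \le uAUC(\gamma')$.

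\textbf{Main obstacle: heterogeneous half-widths.} If $\delta$ genuinely varies with $x$, the decisive event $\{A > \delta_1 + \delta_0\}$ is no longer a superlevel set of $A$ alone. Enlarging $\gamma$ could then preferentially discard pairs with large $A$ but wide intervals, and the truncation argument fails.

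My first attempt would be to condition on the pair of half-width profiles and apply Step 2 pairwise. This requires interpreting the hypothesis as: $p$ is non-decreasing in $\Delta s$ conditionally on $(\delta_1,\delta_0)$, and $\gamma$ scales all widths comonotonically, e.g.\ $\delta(x,\gamma) = \kappa(\gamma)\,\sigma(x)$. Under that reading, set $B = \sigma(X_1) + \sigma(X_0)$. Then decisiveness is $\{A/B > \kappa(\gamma)\}$, and the same lemma applies with $A$ replaced by $A/B$, provided correctness is monotone in $A/B$.

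Without such a structural assumption I expect the proposition can fail. So I would state the result under the homogeneous-width (or comonotone-scaling) reading and note this restriction explicitly.
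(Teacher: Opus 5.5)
Your Steps 1--2 follow the paper's own argument. The paper also writes $uAUC(\gamma)=\E[h(\Delta s)\mid \Delta s>\Omega(\gamma)]$ with $\Omega(\gamma)=\delta(X_1,\gamma)+\delta(X_0,\gamma)$, observes that the decisive events are nested, and concludes from monotonicity of $h$. It only asserts that final averaging step; your mixture decomposition actually proves it.

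Your worry about heterogeneous widths is justified, and it pinpoints the defect in the paper's proof. The proof treats $\Omega(\gamma)$ as a deterministic threshold. When $\delta$ depends on $x$, the event $\{\Delta s>\Omega(\gamma)\}$ is not a superlevel set of $\Delta s$, and nestedness $A_{\gamma_2}\subseteq A_{\gamma_1}$ alone does not force the conditional mean of a non-decreasing function of $\Delta s$ to increase. In fact the proposition as stated is false. Take:
\begin{itemize}
\item a single positive point $a$ with $s(a)=1$, $\sigma(a)=0$;
\item a negative point $b$ (probability $0.8$) with $s(b)=0$, $\sigma(b)=1$;
\item a negative point $c$ (probability $0.2$) with $s(c)=2-\epsilon$, $\sigma(c)=0.1$, for small $\epsilon>0$;
\item half-widths $\delta(x,\gamma)=\kappa(\gamma)\sigma(x)$ with $\kappa$ non-decreasing.
\end{itemize}
All hypotheses hold: the support of $|\Delta s|$ is $\{1-\epsilon,1\}$ with $h(1-\epsilon)=0\le h(1)=1$. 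Yet $uAUC=0.8$ at $\kappa=1/2$ and $uAUC=0$ at $\kappa=2$, because the correctly ordered pairs are the wide ones and are discarded first. So you were right to restrict the statement.

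The same example exposes a misstep in your repair. Conditionally on the width profile, correctness is constant, so ``$p$ non-decreasing in $\Delta s$ given $(\delta_1,\delta_0)$'' holds, and the scaling is comonotone. Still $uAUC$ drops. Applying Step~2 profile by profile gives monotonicity within each profile, but the $\gamma$-dependent reweighting of profiles (a Simpson-type effect) undoes it.

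What actually carries your argument is only your final proviso. With $\delta=\kappa(\gamma)\sigma(x)$, $B>0$ almost surely, $R=A/B$, and $q(r)=\Prob(D>0\mid R=r)$ non-decreasing, one gets $uAUC(\gamma)=\E[q(R)\mid R>\kappa(\gamma)]$, and your truncation lemma applies verbatim. In the example, $R=1$ on correct pairs and $R\approx 10$ on incorrect ones, so this hypothesis correctly excludes it.

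You should drop the conditional-on-widths reading. State the heterogeneous version with monotonicity of $q$ in the standardized gap $A/B$; this hypothesis replaces the paper's assumption on $h$ and reduces to it when widths are homogeneous.
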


This result guarantees that the interval-based framework acts as a valid filter for ranking reliability: as the confidence level $\gamma$ increases, the model abstains more frequently but maintains or improves the quality of its definitive decisions. The proof is provided in Appendix~\ref{app:proof_uauc}.

\section{Estimating the Bounds of the Optimal AUC}
\label{sec:optimal_auc}

In this section, we investigate the relationship between our proposed interval-based metrics and the theoretical limit of discriminative performance. We show that under mild coverage assumptions, $\AUC_L$ and $\AUC_U$ provide principled bounds on the optimal AUC achievable by any scoring function.

\subsection{Optimal AUC and Bayes-Optimal Ranking}

Let $(X,Y) \sim P$ with labels $Y \in \{0,1\}$. We define the true (unknown) posterior probability (the regression function) as $\eta(x) = \Prob(Y=1 \mid X=x)$. For any measurable scoring function $f: \mathcal{X} \to \R$, its population-level AUC is defined as $\AUC(f) = \Prob\bigl(f(X_1) > f(X_0)\bigr)$, where $X_1 \sim P(X \mid Y=1)$ and $X_0 \sim P(X \mid Y=0)$ are independent. The \emph{optimal AUC}, denoted by $\AUC^*$, is the maximum ranking performance attainable under the true data distribution:
\begin{equation}
    \AUC^* := \sup_f \AUC(f).
\end{equation}

A fundamental result in statistical learning theory shows that the Bayes posterior $\eta(x)$ induces the optimal ranking. As shown by \citet{clemencon2008ranking}, any scoring function that is a strictly monotone transformation of $\eta(x)$ maximizes the ROC curve at every point and thus achieves $\AUC^*$:
\begin{equation}
    \AUC^* = \Prob\bigl(\eta(X_1) > \eta(X_0)\bigr).
\end{equation}
This value represents the physical limit of discrimination due to the intrinsic overlap of class-conditional distributions.

\subsection{Bounding Optimal AUC under Marginal Coverage}

While $\eta(x)$ is generally unknown, interval-valued predictions $I(x) = [L(x), U(x)]$ allow us to reason about the location of $\eta(x)$ with a certain degree of confidence. To link $I(x)$ to $\AUC^*$, we assume the intervals satisfy class-conditional coverage guarantees. Let the miscoverage rates be defined as:
\begin{equation}
\alpha_i := \Prob\bigl(\eta(X) \notin I(X) \mid Y=i\bigr),
\qquad i \in \{0,1\}.
\end{equation}
For a randomly drawn pair $(X_1, X_0)$, we define the \emph{pairwise coverage event} $E_{\text{pair}}$ as the case where both intervals correctly capture their respective posterior probabilities:
\begin{equation}
    E_{\text{pair}} := \bigl\{ \eta(X_1) \in I(X_1) \bigr\} \cap \bigl\{ \eta(X_0) \in I(X_0) \bigr\}.
\end{equation}
Assuming independence between $X_1$ and $X_0$, the probability that at least one interval fails to cover $\eta$, denoted as $p_{\text{pair}}$, is:
\begin{equation}
    p_{\text{pair}} := \Prob(E_{\text{pair}}^c) = 1 - (1-\alpha_1)(1-\alpha_0) = \alpha_1 + \alpha_0 - \alpha_1\alpha_0.
\end{equation}
In practice, when $\alpha_1, \alpha_0$ are small, $p_{\text{pair}}$ can be tightly approximated by $\alpha_1 + \alpha_0$.

On the event $E_{\text{pair}}$, the strict interval ordering directly implies the Bayes ordering. Specifically, if $L_1 > U_0$, then $\eta(X_1) \ge L_1 > U_0 \ge \eta(X_0)$, which necessitates $\eta(X_1) > \eta(X_0)$. Conversely, if $\eta(X_1) > \eta(X_0)$, it must hold that $U_1 > L_0$ for any overlapping or correctly ordered intervals. This logical connection allows us to bound the optimal AUC using observable interval-based quantities.

\begin{theorem}[Bounds on the Optimal AUC]
    \label{thm:main_bound}
    Given interval-valued predictions $I(x)$ with class-conditional miscoverage rates $\alpha_1$ and $\alpha_0$, the optimal AUC satisfies:
    \begin{equation}
        \AUC_L - p_{\text{pair}} \;\le\; \AUC^* \;\le\; \AUC_U + p_{\text{pair}}.
    \end{equation}
\end{theorem}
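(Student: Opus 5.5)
The plan is to split every probability on the pairwise coverage event $E_{\text{pair}}$ and its complement. Throughout, $(X_1,X_0)$ is an independent pair with $X_1 \sim P(X\mid Y=1)$ and $X_0 \sim P(X\mid Y=0)$. We write $I_1=I(X_1)$, $I_0=I(X_0)$ and use $\AUC^* = \Prob(\eta(X_1) > \eta(X_0))$ from the Clémençon et al.\ characterization recalled above. By Theorem~\ref{thm:aucl} and Theorem~\ref{thm:aucu}, both sides of the claimed inequality become probabilities of events defined on the same pair:
\[
\AUC_L = \Prob(L_1 > U_0), \qquad \AUC_U = 1 - \Prob(U_1 < L_0).
\]
It therefore suffices to compare these events with the event $\{\eta(X_1) > \eta(X_0)\}$ up to a set of probability at most $p_{\text{pair}} = \Prob(E_{\text{pair}}^c)$. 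The value of $p_{\text{pair}}$ comes from independence of $X_1$ and $X_0$ and the definitions of $\alpha_1,\alpha_0$, as computed above.

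\textbf{Lower bound.} On $E_{\text{pair}}$ we have $\eta(X_1) \ge L_1$ and $\eta(X_0) \le U_0$. Hence $\{L_1 > U_0\} \cap E_{\text{pair}} \subseteq \{\eta(X_1) > \eta(X_0)\}$. This gives
\[
\AUC_L = \Prob(L_1>U_0) \le \Prob(\{L_1>U_0\}\cap E_{\text{pair}}) + \Prob(E_{\text{pair}}^c) \le \AUC^* + p_{\text{pair}},
\]
which rearranges to $\AUC_L - p_{\text{pair}} \le \AUC^*$.

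\textbf{Upper bound.} On $E_{\text{pair}}$, if $\eta(X_1) > \eta(X_0)$, then $U_1 \ge \eta(X_1) > \eta(X_0) \ge L_0$, so $U_1 \ge L_0$ fails to be strictly smaller. Thus $\{\eta(X_1) > \eta(X_0)\} \cap E_{\text{pair}} \subseteq \{U_1 \ge L_0\}$, and
\[
\AUC^* \le \Prob(U_1 \ge L_0) + \Prob(E_{\text{pair}}^c) = 1 - \Prob(U_1 < L_0) + p_{\text{pair}} = \AUC_U + p_{\text{pair}}.
\]

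The main subtlety is bookkeeping about ties and conventions rather than any deep step. First, the paper defines $\AUC(f)$ without the half-credit for ties, and $\AUC^*$ as a supremum. We must make sure the identity $\AUC^* = \Prob(\eta(X_1)>\eta(X_0))$ is used in the strict form, which the inclusions above require. If the tie-corrected version were intended, the upper bound would need to absorb $\tfrac12\Prob(\eta(X_1)=\eta(X_0))$, which is still inside $\{U_1 \ge L_0\}$ on $E_{\text{pair}}$, so the argument goes through. Second, the Stieltjes integrals in Theorems~\ref{thm:aucl} and~\ref{thm:aucu} must be interpreted so that the strict inequalities match, for example with continuous endpoint distributions or an appropriate right/left-continuity convention. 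I would state this as a standing assumption inherited from those theorems.
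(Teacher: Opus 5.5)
Your proof is correct and is essentially the paper's argument. You split on $E_{\text{pair}}$, use $\{L_1>U_0\}\cap E_{\text{pair}}\subseteq\{\eta(X_1)>\eta(X_0)\}$ and $\{\eta(X_1)>\eta(X_0)\}\cap E_{\text{pair}}\subseteq\{U_1\ge L_0\}$, and charge $E_{\text{pair}}^c$ to $p_{\text{pair}}$, which is what the paper's pointwise indicator inequalities do after taking expectations. The one small difference is that you use $\{U_1\ge L_0\}$, whose probability is exactly $\AUC_U$ and which also absorbs ties $\eta(X_1)=\eta(X_0)$, whereas the paper uses the strict event $\{U_1>L_0\}$; this makes your upper bound slightly more robust to the tie convention than the paper's.
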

The proof is provided in Appendix~\ref{app:proof_main_bound}.

This result provides a principled way to estimate the range of the best achievable discrimination. As the model's uncertainty quantification becomes more precise (shorter intervals) while maintaining coverage (small $\alpha$), $\AUC_L$ and $\AUC_U$ converge toward each other, effectively sandwiching the optimal AUC. This framework allows practitioners to quantify not just how well their current model performs, but also how much room remains for improvement relative to the Bayes-optimal limit.

\section{Experiments}

We illustrate and empirically validate the proposed interval-based AUC framework on the Pima Indians Diabetes dataset. Our evaluation
focuses on verifying the theoretical equivalences and characterizing the dynamics of ranking ambiguity. In addition, we demonstrate how the proposed framework supports selective prediction by abstaining on uncertain cases and examining the trade-off between the abstention rate and uncertainty-aware ranking performance as measured by $\AUC_L$ and $\AUC_U$.

\subsection{Experimental Setup}

The Pima dataset contains clinical measurements from 768 female patients aged 21 and above, with 8 features. The binary outcome indicates diabetes onset within five years. The dataset is partitioned into a 30\% training (230 samples) and 70\% test (538 samples) split with stratification to preserve the class distribution.

We train a logistic regression classifier with standard feature scaling using bootstrap resampling. For each test instance $x_i$, we fit 300 logistic regression models on bootstrap samples of the training data and obtain predicted probabilities. Prediction intervals at level $(1-\alpha)$ are constructed using empirical quantiles of the bootstrap predictions, with lower and upper bounds given by $L_i=\mathrm{Q}_{\alpha/2}$ and $U_i=\mathrm{Q}_{1-\alpha/2}$, yielding interval-valued predictions $I(x_i)=[L_i,U_i]$. As a reference, the baseline point AUC computed from mean predicted probabilities is 0.831.

\subsection{Empirical Results and Analysis}
\label{sec:exp1}
\paragraph{Experiment 1: Validation of AUC-Probability Equivalence.}
We first verify the core theoretical results of Theorems~\ref{thm:aucl} and~\ref{thm:aucu} on real data. Using 90\% confidence intervals, we compute $\AUC_L$ via trapezoidal integration of the $\text{TPR}_L$ vs $\text{FPR}_U$ curve and compare it with the empirical estimate $\mathbb{P}(I_1 > I_0)$ obtained by counting pairwise interval comparisons. Similarly, we compute $\AUC_U$ from the $\text{FPR}_L$ vs $\text{TPR}_U$ curve and verify its relationship with $1 - \mathbb{P}(I_1 < I_0)$.

\begin{table}[htbp]
\centering
\caption{Empirical validation of AUC--probability equivalence at 90\% CI.
All discrepancies are below 0.02\%.}
\label{tab:validation_90ci}
\begin{tabular}{lrr}
\toprule
Quantity & Value & Diff \\
\midrule
$\AUC_L$ vs. $\mathbb{P}(I_1 > I_0)$ & 0.6074 / 0.6072 & 0.0002 \\
$\AUC_U$ vs. $1-\mathbb{P}(I_1 < I_0)$ & 0.9427 / 0.9439 & 0.0013 \\
\midrule
$\mathbb{P}(I_1 > I_0)$ & 0.6072 & --- \\
$\mathbb{P}(\text{overlap})$ & 0.3367 & --- \\
$\mathbb{P}(I_1 < I_0)$ & 0.0561 & --- \\
Sum of probabilities & 1.0000 & --- \\
\bottomrule
\end{tabular}
\end{table}

Table~\ref{tab:validation_90ci} summarizes the results. The difference between $\AUC_L$ and $\mathbb{P}(I_1 > I_0)$ is less than 0.02\%, and a similarly small discrepancy is observed for $\AUC_U$, confirming the theoretical equivalences. The three probabilities sum to 1.0000, validating Corollary~\ref{cor:decomposition}. With 90\% confidence intervals, we observe $\mathbb{P}(I_1 > I_0) = 0.607$, indicating that approximately 61\% of pairwise rankings are definitively correct, $\mathbb{P}(\text{overlap}) = 0.337$ are uncertain, and $\mathbb{P}(I_1 < I_0) = 0.056$ are confidently incorrect. By contrast, a standard AUC-based evaluation would collapse these outcomes into a single estimate,
reporting that approximately $83.1\%$ of pairwise rankings are correct and $16.9\%$ are incorrect. This aggregation obscures the fact that only a small fraction of errors are made with high confidence, while a substantial portion of comparisons are fundamentally ambiguous due to interval overlap. Figure~\ref{fig:combined_plot} visualizes the combined ROC-style curves in the unit square, showing the three-region decomposition geometrically.

\paragraph{Experiment 2: Three-Region View of Interval-Induced Discrimination.}

We examine how interval width affects discrimination by varying the confidence level from 50\% to 95\% (Table~\ref{tab:ci_comparison}). Narrow intervals (50\% CI) yield well-separated predictions with low overlap, while wide intervals (95\% CI) produce substantial ambiguity. 

At 50\% confidence, the interval-based metrics indicate strong discrimination, with $\AUC_L = 0.749$ and $\mathbb{P}(\text{overlap}) = 0.142$. As the confidence level increases, $\AUC_L$ decreases while $\mathbb{P}(\text{overlap})$ increases, reaching $0.559$ and $0.398$ at 95\% confidence, respectively. At the same time, $\AUC_U$ increases as the interval constraints become less restrictive. Together, these metrics summarize how increasing interval width degrades discrimination by reducing the fraction of decisively ordered pairs.

\begin{table}[htbp]
\centering
\caption{Effect of confidence level on interval-based metrics. Narrower intervals (50\% CI) reduce overlap but may undercover; wider intervals (95\% CI) increase overlap but ensure higher coverage.}
\label{tab:ci_comparison}
\begin{tabular}{lrrrr}
\toprule
CI Level & $\AUC_L$ & $\AUC_U$ & $\mathbb{P}(\text{overlap})$  \\
\midrule
50\% & 0.7492 & 0.8914 & 0.1423  \\
70\% & 0.6997 & 0.9139 & 0.2142  \\
90\% & 0.6072 & 0.9439 & 0.3367  \\
95\% & 0.5585 & 0.9562 & 0.3977  \\
\bottomrule
\end{tabular}
\end{table}

The three-region decomposition,
$\Prob(I_1 > I_0) + \Prob(\text{overlap}) + \Prob(I_1 < I_0) = 1$,
provides a structural interpretation of this redistribution.
Figure~\ref{fig:three_region} visualizes the evolution of these regions across confidence levels.
At 0\% CI, the overlap region is absent and $\Prob(I_1 > I_0)$ recovers the classical point-based AUC.
As intervals widen, probability mass shifts primarily from $\Prob(I_1 > I_0)$ to the overlap region, while the rate of confident misorderings $\Prob(I_1 < I_0)$ remains uniformly low ($<10\%$).
This indicates that wider intervals favor caution, converting previously decisive comparisons into ambiguous ones rather than inducing high-confidence ranking reversals.
Consequently, the scalar trends in Table~\ref{tab:ci_comparison} reflect projections of this global redistribution onto summary measures such as $\AUC_L$ and $\AUC_U$.

\begin{figure}[htbp]
\centering
\includegraphics[
  width=0.7\columnwidth,
  clip
]{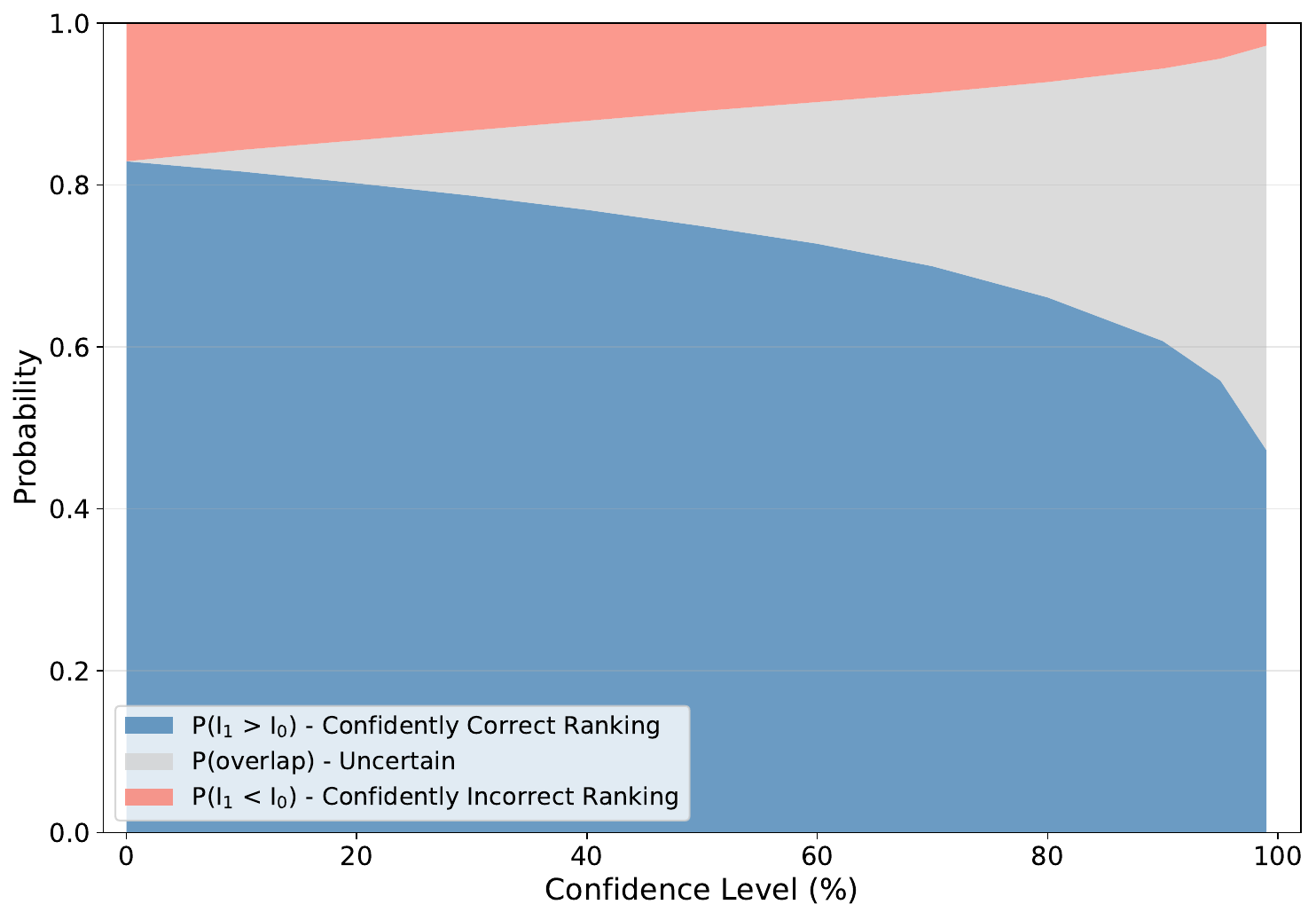}
\caption{Three-region Decomposition vs. Confidence level.
The stacked areas show $\mathbb{P}(I_1 > I_0)$ (blue, confident correct rankings),
$\mathbb{P}(\text{overlap})$ (gray, ambiguous rankings),
and $\mathbb{P}(I_1 < I_0)$ (red, confident misorderings).
At 0\% confidence level, the decomposition reduces to the classical point-based case with no overlap.
As the confidence level increases from 0\% to 99\%, probability mass shifts from ordered regions to the overlap region,
while the total probability remains equal to 1.}
\label{fig:three_region}
\end{figure}

\paragraph{Experiment 3: Selective Prediction}
\begin{figure}[tb]
\centering
\includegraphics[width=0.7\columnwidth]{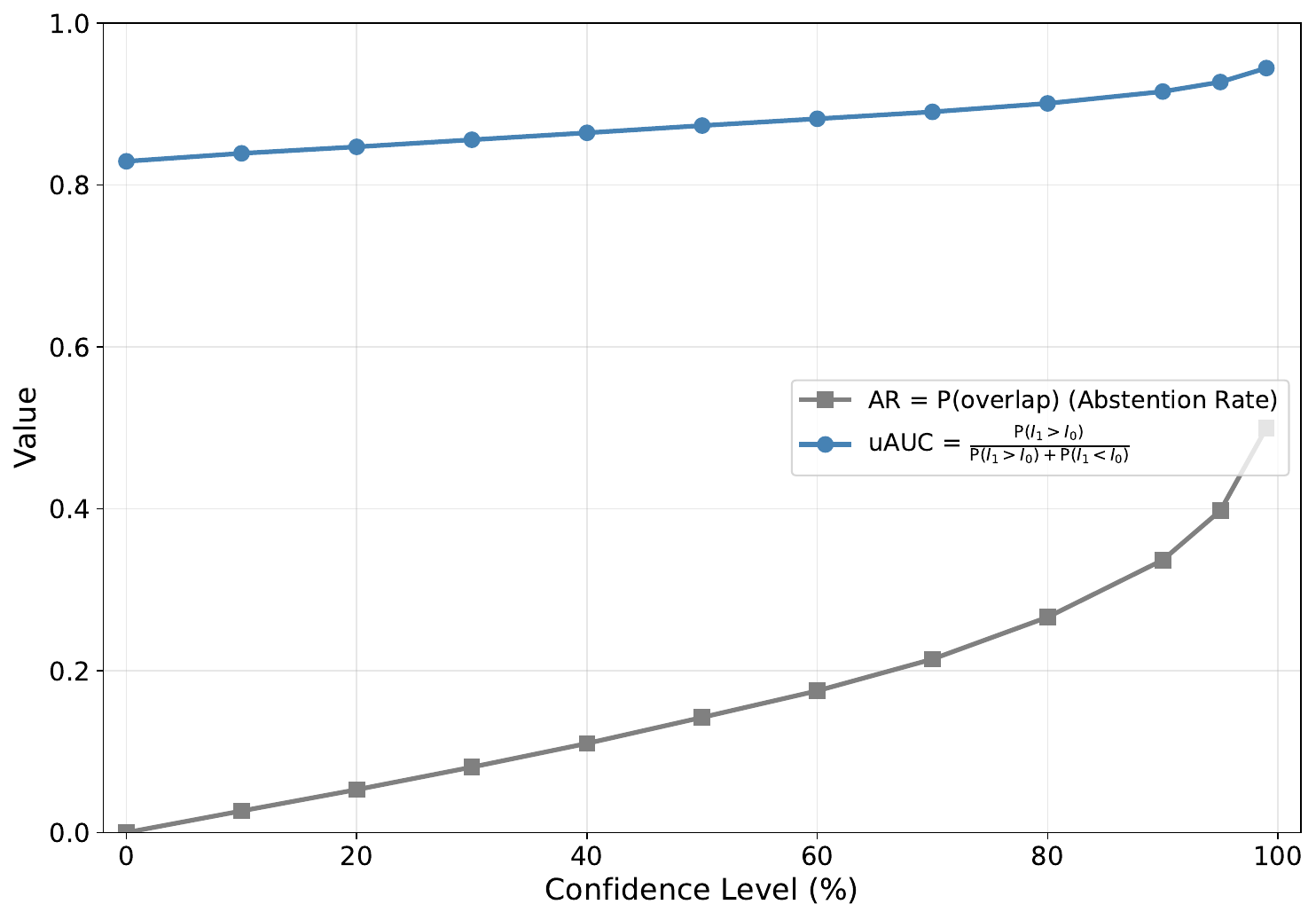}
\caption{Selective prediction metrics versus nominal confidence level. The abstention rate (AR, gray) increases with interval width, while the uncertainty-aware AUC ($uAUC$, blue) remains stable ($0.83$--$0.94$) and monotonically increasing.}
\label{fig:ar_uauc}
\end{figure}

Our framework naturally supports selective prediction by identifying pairs where the model should \emph{abstain} due to ranking ambiguity. The \textbf{uncertainty-aware AUC ($uAUC$)} is defined as the conditional probability of a correct ordering given that the intervals do not overlap (see equation \eqref{eq:uauc_def}). Physically, $uAUC$ quantifies the discriminative quality of the model's \emph{definitive} decisions. At a 90\% confidence level, the classifier abstains on 33.7\% of pairs (AR = 0.337) but achieves a $uAUC$ of 0.916 on the remaining instances, significantly exceeding the baseline point-AUC of 0.8306.

As shown in Figure~\ref{fig:ar_uauc}, as the confidence level increases from 50\% to 99\%, the abstention rate rises while the $uAUC$ steadily increases. This stability is a crucial finding: it suggests that the interval-based framework effectively filters out inherently ambiguous pairs while preserving a highly reliable discriminative core. For safety-critical tasks like medical screening, this allows practitioners to set an abstention threshold that guarantees a target level of ranking reliability.

\subsection{Empirical Validation of Optimal AUC Bounds}

We empirically validate Theorem~\ref{thm:main_bound} using a synthetic experiment in which the true posterior
$\eta(x)$ is known, allowing for the direct calculation of the Bayes-optimal AUC.

\paragraph{Experimental Setup.}
We consider a binary classification task with a 1D feature $X$ following $P(X|Y=i) = \mathcal{N}(\mu_i, 1)$. To focus on the high-ambiguity regime where uncertainty-aware evaluation is most critical, we set $\mu_0=0$ and $\mu_1=1$ as a stress test, yielding a ground-truth $AUC^* = 0.764$.  For a given miscoverage rate $\alpha$, we construct prediction intervals $I(x) = [L(x), U(x)]$ with half-width $\delta = 0.05 + 0.3\alpha + 0.1\operatorname{std}(\eta)$. Intervals are centered at $\eta(x)$ for a $(1-\alpha)$ fraction of samples, while miscoverage is induced for the remaining $\alpha$ fraction by shifting centers by $\delta + \epsilon$, where $\epsilon \sim \mathrm{Uniform}(0.01, 0.1)$. All interval boundaries are clipped to $[0, 1]$.

\begin{figure}[htbp]
\centering
\includegraphics[width=0.6\columnwidth]{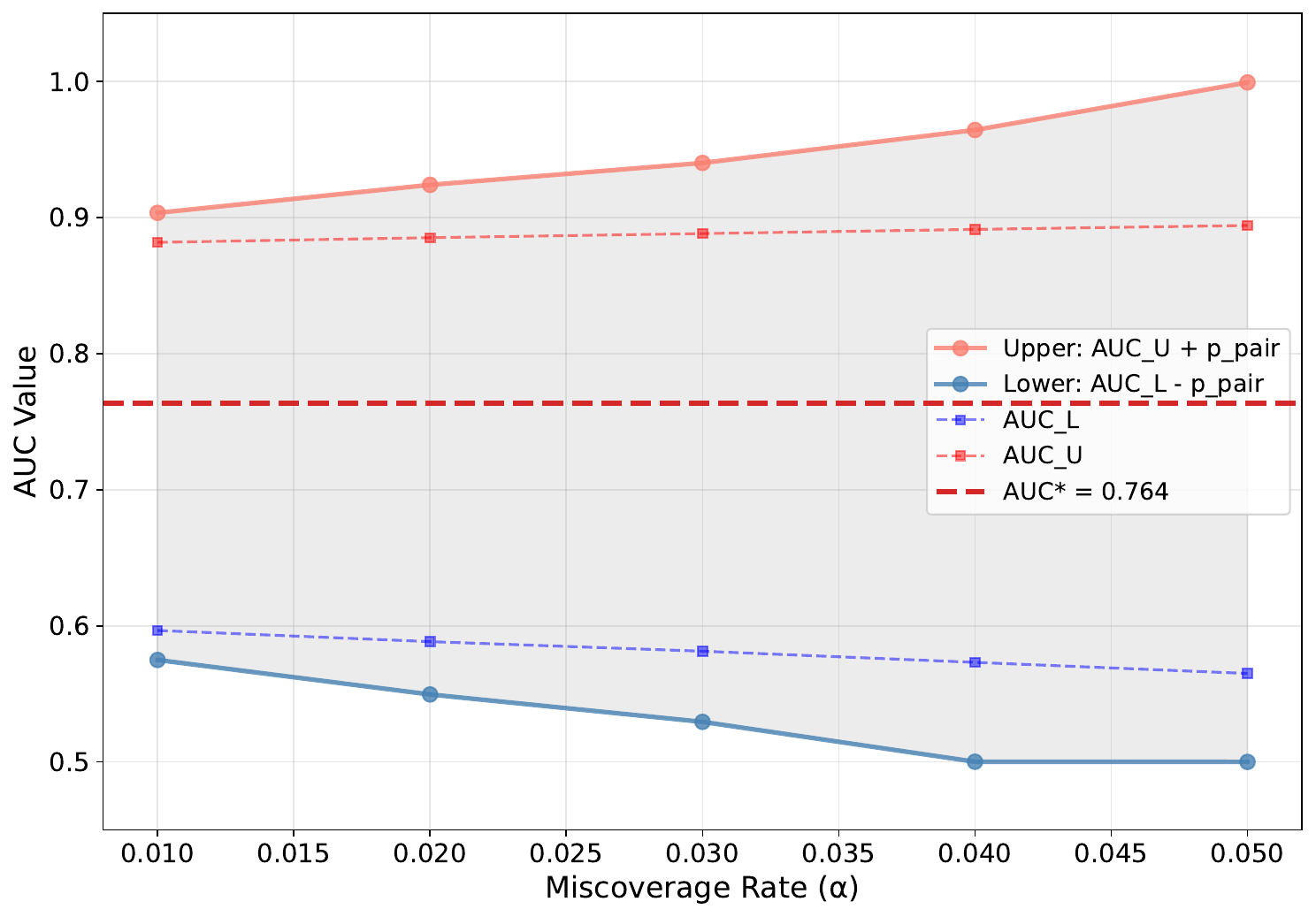}
\caption{Empirical validation of theoretical bounds on the optimal AUC ($AUC^*$). The shaded region denotes the theoretical range $[AUC_L - p_{\text{pair}}, AUC_U + p_{\text{pair}}]$, which strictly contains the optimal AUC across varying miscoverage rates $\alpha$.}
\label{fig:bound}
\end{figure}

\paragraph{Results.}
As shown in Figure~\ref{fig:bound}, the optimal $AUC^*$ is strictly contained within the theoretical bounds across all evaluated miscoverage rates. As $\alpha$ increases, the bounds widen monotonically, reflecting the explicit dependence on the
pairwise miscoverage probability $p_{\mathrm{pair}}$.
At $\alpha = 0.01$, the bound width is $0.328$, corresponding to the interval $[0.574, 0.903]$.
At $\alpha = 0.05$, the bound expands to $[0.459, 0.992]$.
This behavior is expected in a high-overlap setting, where a large fraction of pairwise comparisons
is inherently ambiguous and cannot be resolved without stronger assumptions.

The width of these bounds reflects the conservatism inherent to distribution-free guarantees in high-overlap settings. Rather than providing a tight performance estimate, the bounds characterize the range of Bayes-optimal discrimination compatible with the observed interval uncertainty. In this regime, the gap between $AUC_L$ and $AUC_U$ serves as a quantitative indicator of ranking ambiguity, reflecting the information limitations imposed by substantial feature overlap.

\subsection{Summary}

The empirical results on real-world medical data validate the theoretical properties of $\AUC_L$, $\AUC_U$, and the three-region decomposition with high precision. By identifying ranking ambiguity, our framework supports selective prediction strategies that yield significant benefits in high-stakes risk prediction tasks. Furthermore, we establish conservative bounds on the optimal AUC, providing a principled range for the best achievable discrimination under observed interval uncertainty.

\section{Discussion and Conclusion}

This paper introduces the iAUC framework, extending ROC analysis to accommodate interval-valued predictions. By defining $\text{AUC}_L$ and $\text{AUC}_U$, we provide a principled way to evaluate uncertainty-aware classifiers beyond point-score metrics that discard confidence information. The core of our framework is the three-region decomposition, which partitions pairwise rankings into correct, incorrect, and ambiguous categories. Experiments demonstrate that this decomposition reveals how predictive uncertainty converts decisive rankings into overlaps rather than high-confidence errors, clarifying model reliability in high-stakes tasks.

Furthermore, we established a formal link between interval-based metrics and theoretical discrimination limits. Under valid coverage, $\text{AUC}_L$ and $\text{AUC}_U$ provide informative bounds on the optimal AUC. The $uAUC$ metric further enables selective prediction by abstaining on ambiguous pairs while maintaining a highly reliable discriminative core. Future work may explore multi-class settings or integration with conformal prediction for tighter guarantees. In conclusion, iAUC offers a rigorous, practical toolkit for evaluating and deploying uncertainty-aware classification models.

\bibliographystyle{plainnat}
\bibliography{references}

\newpage
\appendix
\onecolumn
\section{Proof of Proposition~\ref{thm:uauc}}
\label{app:proof_uauc}

\begin{proof}
For a given confidence level $\gamma$, let $\Omega(\gamma) = \delta(X_1, \gamma) + \delta(X_0, \gamma)$ be the combined width threshold. We define the decisive (non-overlapping) event as $A_\gamma := \{ |s(X_1) - s(X_0)| > \Omega(\gamma) \}$. By definition, the uncertainty-aware AUC is the conditional expectation:
\begin{equation}
    uAUC(\gamma) = \E[\mathbf{1}\{\text{correct ranking}\} \mid A_\gamma].
\end{equation}

Since $\delta(x, \gamma)$ is non-decreasing in $\gamma$ for all $x$, the threshold $\Omega(\gamma)$ is also non-decreasing. It follows that for any $\gamma_1 < \gamma_2$, the decisive events satisfy the nesting property $A_{\gamma_2} \subseteq A_{\gamma_1}$. 

Let $h(\Delta s) = \Prob(\text{correct ranking} \mid \Delta s)$ be the ranking accuracy function, which is non-decreasing by assumption. The $uAUC(\gamma)$ can be expressed as a truncated expectation of $h(\Delta s)$:
\begin{equation}
    uAUC(\gamma) = \E[h(\Delta s) \mid \Delta s > \Omega(\gamma)].
\end{equation}
As $\gamma$ increases, the condition $\Delta s > \Omega(\gamma)$ restricts the expectation to a subset of pairs with strictly larger score differentials. Because $h(\Delta s)$ is non-decreasing, the average accuracy over this more restrictive set must be greater than or equal to that of the broader set. Therefore, $uAUC(\gamma_2) \ge uAUC(\gamma_1)$, which completes the proof.
\end{proof}

\section{Proof of Theorem~\ref{thm:main_bound}}
\label{app:proof_main_bound}

\begin{proof}
Let $E_{\mathrm{pair}}$ denote the event that both interval predictions cover the corresponding Bayes posterior values,
that is,
\[
E_{\mathrm{pair}} := \{\eta(X_1) \in I_1\} \cap \{\eta(X_0) \in I_0\},
\]
and let $p_{\mathrm{pair}} = \mathbb{P}(E_{\mathrm{pair}}^c)$.

For the lower bound, observe that whenever $E_{\mathrm{pair}}$ holds,
\[
L_1 > U_0 \;\Rightarrow\; \eta(X_1) > \eta(X_0).
\]
This implies the pointwise inequality
\[
\mathbf{1}\{L_1 > U_0\}
\;\le\;
\mathbf{1}\{\eta(X_1) > \eta(X_0)\} + \mathbf{1}\{E_{\mathrm{pair}}^c\}.
\]
Taking expectations yields
\[
\AUC_L \le \AUC^* + p_{\mathrm{pair}},
\]
or equivalently $\AUC_L - p_{\mathrm{pair}} \le \AUC^*$.

For the upper bound, note that under $E_{\mathrm{pair}}$,
\[
\eta(X_1) > \eta(X_0) \;\Rightarrow\; U_1 > L_0.
\]
Hence,
\[
\mathbf{1}\{\eta(X_1) > \eta(X_0)\}
\;\le\;
\mathbf{1}\{U_1 > L_0\} + \mathbf{1}\{E_{\mathrm{pair}}^c\}.
\]
Taking expectations gives
\[
\AUC^* \le \AUC_U + p_{\mathrm{pair}}.
\]
Combining the two bounds completes the proof.
\end{proof}

\end{document}